\documentclass{article} 
\pdfoutput=1

%


\PassOptionsToPackage{numbers, compress}{natbib}
\usepackage[final]{nips_2016}

\usepackage[utf8]{inputenc} 
\usepackage[T1]{fontenc}    
\usepackage{url}            
\usepackage{booktabs}       
\usepackage{amsfonts}       
\usepackage{nicefrac}       
\usepackage{microtype}      

\usepackage{graphicx}
\usepackage{algorithm,algorithmic}
\usepackage{amsthm,amsmath}

\title{Direct Feedback Alignment Provides Learning in Deep Neural Networks}

\author{
Arild N\o{}kland \\
Trondheim, Norway \\
\texttt{arild.nokland@gmail.com}
}

\newtheorem{theorem}{Theorem}

\begin{document}

\maketitle

\begin{abstract}

Artificial neural networks are most commonly trained with the back-propagation algorithm, where the gradient for learning is provided by back-propagating the error, layer by layer, from the output layer to the hidden layers. A recently discovered method called feedback-alignment shows that the weights used for propagating the error backward don't have to be symmetric with the weights used for propagation the activation forward. In fact, random feedback weights work evenly well, because the network learns how to make the feedback useful. In this work, the feedback alignment principle is used for training hidden layers more independently from the rest of the network, and from a zero initial condition. The error is propagated through fixed random feedback connections directly from the output layer to each hidden layer. This simple method is able to achieve zero training error even in convolutional networks and very deep networks, completely without error back-propagation. The method is a step towards biologically plausible machine learning because the error signal is almost local, and no symmetric or reciprocal weights are required. Experiments show that the test performance on MNIST and CIFAR is almost as good as those obtained with back-propagation for fully connected networks. If combined with dropout, the method achieves 1.45\% error on the permutation invariant MNIST task. 

\end{abstract}

\section{Introduction}

For supervised learning, the back-propagation algorithm (BP), see \citep{Hinton86}, has achieved great success in training deep neural networks. As today, this method has few real competitors due to its simplicity and proven performance, although some alternatives do exist.

Boltzmann machine learning in different variants are biologically inspired methods for training neural networks, see \citep{HintonGE83}, \citep{SalakhutdinovH09} and \citep{HintonOT06}. The methods use only local available signals for adjusting the weights. These methods can be combined with BP fine-tuning to obtain good discriminative performance.

Contrastive Hebbian Learning (CHL), is similar to Boltzmann Machine learning, but can be used in deterministic feed-forward networks. In the case of weak symmetric feedback-connections it resembles BP \citep{XieS03}.

Recently, target-propagation (TP) was introduced as an biologically plausible training method, where each layer is trained to reconstruct the layer below \citep{LeeZFB15}. This method does not require symmetric weights and propagates target values instead of gradients backward.

A novel training principle called feedback-alignment (FA) was recently introduced \citep{LillicrapCTA14}. The authors show that the feedback weights used to back-propagate the gradient do not have to be symmetric with the feed-forward weights. The network learns how to use fixed random feedback weights in order to reduce the error. Essentially, the network learns how to learn, and that is a really puzzling result.

Back-propagation with asymmetric weights was also explored in \citep{LiaoLP15}. One of the conclusions from this work is that the weight symmetry constraint can be significantly relaxed while still retaining strong performance.

The back-propagation algorithm is not biologically plausible for several reasons. First, it requires symmetric weights. Second, it requires separate phases for inference and learning. Third, the learning signals are not local, but have to be propagated backward, layer-by-layer, from the output units. This requires that the error derivative has to be transported as a second signal through the network. To transport this signal, the derivative of the non-linearities have to be known.

All mentioned methods require the error to travel backward through reciprocal connections. This is biologically plausible in the sense that cortical areas are known to be reciprocally connected \citep{Gilbert2013}. The question is how an error signal is relayed through an area to reach more distant areas. For BP and FA the error signal is represented as a second signal in the neurons participating in the forward pass. For TP the error is represented as a change in the activation in the same neurons. Consider the possibility that the error in the relay layer is represented by neurons not participating in the forward pass. For lower layers, this implies that the feedback path becomes disconnected from the forward path, and the layer is no longer reciprocally connected to the layer above.

The question arise whether a neuron can receive a teaching signal also through disconnected feedback paths. This work shows experimentally that directly connected feedback paths from the output layer to neurons earlier in the pathway is sufficient to enable error-driven learning in a deep network. The requirements are that the feedback is random and the whole network is adapted. The concept is quite different from back-propagation, but the result is very similar. Both methods seem to produce features that makes classification easier for the layers above. 

Figure \ref{fig:Architecture}c) and d) show the novel feedback path configurations that is further explored in this work. The methods are based on the feedback alignment principle and is named "direct feedback-alignment" (DFA) and "indirect feedback-alignment" (IFA).

\begin{figure}[ht]
	\centering
	\includegraphics[height=0.40\linewidth]{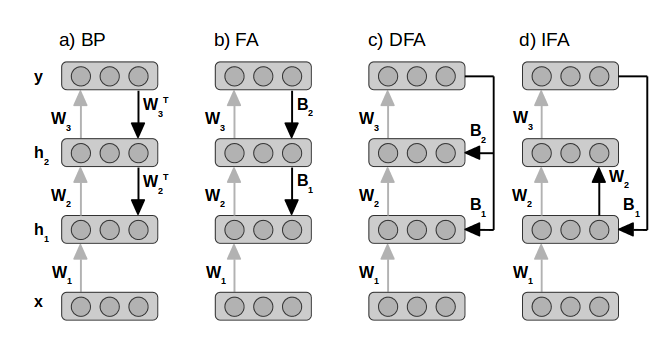}
	\caption{Overview of different error transportation configurations. Grey arrows indicate activation paths and black arrows indicate error paths. Weights that are adapted during learning are denoted as $W_{i}$, and weights that are fixed and random are denoted as $B_{i}$. a) Back-propagation. b) Feedback-alignment. c) Direct feedback-alignment. d) Indirect feedback-alignment.}
	\label{fig:Architecture}
\end{figure}

\section{Method}

Let $(x,y)$ be mini-batches of input-output vectors that we want the network to learn. For simplicity, assume that the network has only two hidden layers as in Figure \ref{fig:Architecture}, and that the target output $y$ is scaled between 0 and 1. Let the rows in $W_{i}$ denote the weights connecting the layer below to a unit in hidden layer $i$, and let $b_{i}$ be a column vector with biases for the units in hidden layer $i$. The activations in the network are then calculated as
\begin{equation}
\label{eq:1}
a_{1} = W_{1}x + b_{1},\;
h_{1} = f(a_{1})
\end{equation}
\begin{equation}
\label{eq:2}
a_{2} = W_{2}h_{1} + b_{2},\;
h_{2} = f(a_{2})
\end{equation}
\begin{equation}
\label{eq:3}
a_{y} = W_{3}h_{2} + b_{3},\;
\hat{y} = f_y(a_{y})
\end{equation}

where $f()$ is the non-linearity used in hidden layers and $f_y()$ the non-linearity used in the output layer. If we choose a logistic activation function in the output layer and a binary cross-entropy loss function, the loss for a mini-batch with size $N$ and the gradient at the output layer $e$ are calculated as
\begin{equation}
\label{eq:4}
J = - \frac{1}{N} \sum_{m,n} y_{mn}  \log \hat{y}_{mn} + (1-y_{mn} ) \log (1 - \hat{y}_{mn})
\end{equation}
\begin{equation}
\label{eq:5}
e = \delta a_{y} = \frac{\partial J}{\partial a_{y}} = \hat{y} - y
\end{equation}
where $m$ and $n$ are output unit and mini-batch indexes. For the BP, the gradients for hidden layers are calculated as
\begin{equation}
\label{eq:6}
\delta a_{2} = \frac{\partial J}{\partial a_{2}} = (W^T_{3}e)\odot f'(a_{2}),\;
\delta a_{1} = \frac{\partial J}{\partial a_{1}} = (W^T_{2}\delta a_{2})\odot f'(a_{1})
\end{equation}
where $\odot$ is an element-wise multiplication operator and $f'()$ is the derivative of the non-linearity. This gradient is also called steepest descent, because it directly minimizes the loss function given the linearized version of the network. For FA, the hidden layer update directions are calculated as
\begin{equation}
\label{eq:7}
\delta a_{2} = (B_{2}e)\odot f'(a_{2}),\;
\delta a_{1} = (B_{1}\delta a_{2} )\odot f'(a_{1})
\end{equation}
where $B_{i}$ is a fixed random weight matrix with appropriate dimension. For DFA, the hidden layer update directions are calculated as
\begin{equation}
\label{eq:8}
\delta a_{2} = (B_{2}e)\odot f'(a_{2}),\;
\delta a_{1} = (B_{1}e)\odot f'(a_{1})
\end{equation}
where $B_{i}$ is a fixed random weight matrix with appropriate dimension. If all hidden layers have the same number of neurons, $B_{i}$ can be chosen identical for all hidden layers. For IFA, the hidden layer update directions are calculated as
\begin{equation}
\label{eq:9}
\delta a_{2} = (W_{2}\delta a_{1})\odot f'(a_{2}),\;
\delta a_{1} = (B_{1}e )\odot f'(a_{1})
\end{equation}
where $B_{1}$ is a fixed random weight matrix with appropriate dimension. Ignoring the learning rate, the weight updates for all methods are calculated as
\begin{equation}
\label{eq:10}
\delta W_{1} = - \delta a_1 x^T,\;
\delta W_{2} = - \delta a_2 h^T_1,\;
\delta W_{3} = - e h^T_2
\end{equation}

\section{Theoretical results}

BP provides a gradient that points in the direction of steepest descent in the loss function landscape. FA provides a different update direction, but experimental results indicate that the method is able to reduce the error to zero in networks with non-linear hidden units. This is surprising because the principle is distinct different from steepest descent. For BP, the feedback weights are the transpose of the forward weights. For FA the feedback weights are fixed, but if the forward weights are adapted, they will approximately align with the pseudoinverse of the feedback weights in order to make the feedback useful \citep{LillicrapCTA14}. 

The feedback-alignment paper \citep{LillicrapCTA14} proves that fixed random feedback asymptotically reduces the error to zero. The conditions for this to happen are freely restated in the following. 1) The network is linear with one hidden layer. 2) The input data have zero mean and standard deviation one. 3) The feedback matrix $B$ satisfies $B^+ B = I$ where $B^+$ is the Moore-Penrose pseudo-inverse of $B$. 4) The forward weights are initialized to zero. 5) The output layer weights are adapted to minimize the error. Let's call this novel principle the feedback alignment principle.

It is not clear how the feedback alignment principle can be applied to a network with several non-linear hidden layers. The experiments in \citep{LillicrapCTA14} show that more layers can be added if the error is back-propagated layer-by-layer from the output.

The following theorem points at a mechanism that can explain the feedback alignment principle. The mechanism explains how an asymmetric feedback path can provide learning by aligning the back-propagated and forward propagated gradients with it's own, under the assumption of constant update directions for each data point.

\begin{theorem}
\label{theorem_fa}
Given 2 hidden layers $k$ and $k+1$ in a feed-forward neural network where $k$ connects to $k+1$. Let $h_k$ and $h_{k+1}$ be the hidden layer activations. Let the functional dependency between the layers be $h_{k+1} = f(a_{k+1})$, where $a_{k+1} = W h_k + b$. Here $W$ is a weight matrix, $b$ is a bias vector and $f()$ is a non-linearity. Let the layers be updated according to the non-zero update directions $\delta h_k$ and $\delta h_{k+1}$ where $\frac{\delta h_k}{\|\delta h_k\|}$ and $\frac{\delta h_{k+1}}{\|\delta h_{k+1}\|}$ are constant for each data point. The negative update directions will minimize the following layer-wise criterion
\begin{equation}
\label{eq:11}
K = K_k + K_{k+1} = \frac{\delta h_k^T h_k}{\|\delta h_k\|} + \frac{\delta h_{k+1}^T h_{k+1}}{\|\delta h_{k+1}\|}
\end{equation}
Minimizing $K$ will maximize the gradient maximizing the alignment criterion
\begin{equation}
\label{eq:12}
L = L_k + L_{k+1} = \frac{\delta h_k^T c_k}{\|\delta h_k\|} + \frac{\delta h_{k+1}^T c_{k+1}}{\|\delta h_{k+1}\|}
\end{equation}
where 
\begin{equation}
\label{eq:13}
c_k = \frac{\partial h_{k+1}}{\partial h_k} \delta h_{k+1} = W^T (\delta h_{k+1} \odot f'(a_{k+1}))
\end{equation}
\begin{equation}
\label{eq:14}
c_{k+1} = \frac{\partial h_{k+1}}{\partial h_k^T} \delta h_k = (W \delta h_k) \odot f'(a_{k+1})
\end{equation}
If $L_k>0$, then is $-\delta h_k$ a descending direction in order to minimize $K_{k+1}$.

\end{theorem}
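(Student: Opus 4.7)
The plan is to exploit the assumption that $\hat u_k := \delta h_k/\|\delta h_k\|$ and $\hat u_{k+1} := \delta h_{k+1}/\|\delta h_{k+1}\|$ are constant for each data point, so that the two parts of $K$ can be read as linear functionals of the relevant layer activations. Concretely, $K_k = \hat u_k^T h_k$ and $K_{k+1} = \hat u_{k+1}^T h_{k+1}$, so $\nabla_{h_k} K_k = \hat u_k$ and $\nabla_{h_{k+1}} K_{k+1} = \hat u_{k+1}$ are fixed directions. This reduces the theorem to a chain-rule computation through the layerwise map $h_{k+1} = f(Wh_k + b)$.

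First I would verify equations (13)--(14). The Jacobian is $\partial h_{k+1}/\partial h_k^T = \mathrm{diag}(f'(a_{k+1}))\,W$; contracting its transpose from the left with $\delta h_{k+1}$ and using the identity $\mathrm{diag}(v)u = v\odot u$ yields $c_k = W^T(\delta h_{k+1}\odot f'(a_{k+1}))$, and pushing $\delta h_k$ through the Jacobian itself gives $c_{k+1} = (W\delta h_k)\odot f'(a_{k+1})$. Next, the descent claim for $K$ itself is immediate from the opening observation: the directional derivative of $K_k$ along $-\delta h_k$ equals $-\hat u_k^T\delta h_k = -\|\delta h_k\|<0$, and analogously for $K_{k+1}$ along $-\delta h_{k+1}$.

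For the final implication I would view $K_{k+1} = \hat u_{k+1}^T f(Wh_k + b)$ as a function of $h_k$ (with the unit vector $\hat u_{k+1}$ held fixed by the standing assumption). The chain rule gives
\begin{equation}
\nabla_{h_k} K_{k+1} \;=\; W^T\bigl(\hat u_{k+1}\odot f'(a_{k+1})\bigr) \;=\; \frac{c_k}{\|\delta h_{k+1}\|}.
\end{equation}
The directional derivative of $K_{k+1}$ along $-\delta h_k$ is therefore $-\delta h_k^T c_k/\|\delta h_{k+1}\| = -L_k\,\|\delta h_k\|/\|\delta h_{k+1}\|$, which is strictly negative exactly when $L_k>0$. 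This establishes that $-\delta h_k$ descends $K_{k+1}$ under the hypothesis $L_k>0$, proving the final sentence of the theorem. A symmetric computation through the Jacobian shows that $c_{k+1}$ plays the analogous role for the effect of a $W$-update on $K_{k+1}$, making explicit the "alignment" interpretation: once the update direction $\delta h_k$ has positive inner product with $c_k$, descending $K_k$ automatically drags $K_{k+1}$ downward as well.

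The main obstacle is interpretive rather than technical: the sentence ``Minimizing $K$ will maximize the gradient maximizing the alignment criterion $L$'' is informal, and the rigorous content lives in the final implication. I would present the formal chain-rule argument above as the core of the proof, and support the alignment wording by noting that, since the gradient of $K_{k+1}$ with respect to $h_k$ is parallel to $c_k$, any update of the forward parameters that decreases $K_{k+1}$ necessarily rotates $c_k$ toward $\delta h_k$, i.e.\ increases $L_k$; this closes the feedback loop between the descent statement and the alignment statement without requiring a separate formal claim.
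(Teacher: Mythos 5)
Your treatment of the two outer claims is sound, and for the final implication you actually take a cleaner route than the paper: you compute $\nabla_{h_k}K_{k+1}=W^T(\hat u_{k+1}\odot f'(a_{k+1}))=c_k/\|\delta h_{k+1}\|$ directly and read off the directional derivative $-L_k\|\delta h_k\|/\|\delta h_{k+1}\|<0$, whereas the paper argues via the angle $\cos\beta = c_k^T\delta h_k/(\|c_k\|\|\delta h_k\|)=L_k/\|c_k\|>0$ and the fact that any vector within $90^\circ$ of the steepest-descent direction is descending. These are equivalent; your version is more economical.

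The genuine gap is in the middle claim, ``Minimizing $K$ will maximize the gradient maximizing the alignment criterion $L$.'' You correctly flag it as informal, but the substitute you offer --- that any forward-parameter update decreasing $K_{k+1}$ ``necessarily rotates $c_k$ toward $\delta h_k$, i.e.\ increases $L_k$'' --- does not follow and is not the paper's argument. Decreasing $K_{k+1}$ constrains $h_{k+1}$, not the orientation of $c_k$ relative to $\delta h_k$; one can move $W$ (or $b$) so as to lower $K_{k+1}$ without increasing $\delta h_k^T c_k$. The paper's actual mechanism is a computation on the prescribed weight update. First it shows $\frac{\partial L_k}{\partial W}$ and $\frac{\partial L_{k+1}}{\partial W}$ are both positive multiples of $\delta a_{k+1}\delta h_k^T$ where $\delta a_{k+1}=\delta h_{k+1}\odot f'(a_{k+1})$. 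Then it projects $h_k$ onto $\delta h_k$, writing $h_k = \alpha_k K_k\,\delta h_k + h_{k,\mathrm{res}}$ with $\alpha_k = 1/\|\delta h_k\|$, so that the prescribed update decomposes as
\begin{equation*}
\delta W = -\delta a_{k+1} h_k^T = -K_k\,\frac{\partial L_k}{\partial W} \;-\; \delta a_{k+1} h_{k,\mathrm{res}}^T .
\end{equation*}
Hence the component of $\delta W$ along the $L$-ascent direction has coefficient $-K_k$: driving $K_k$ negative is what makes the weight update contain a positive multiple of $\nabla_W L$, and minimizing $K_k$ is exactly what the prescribed update $-\delta h_k$ does. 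Without this decomposition the sentence you are trying to justify has no formal content, so you should add it (or an equivalent) rather than rely on the rotation heuristic.
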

\begin{proof}
Let $i$ be the any of the layers $k$ or $k+1$. The prescribed update $-\delta h_i$ is the steepest descent direction in order to minimize $K_i$ because by using the product rule and the fact that any partial derivative of $\frac{\delta h_i}{\|\delta h_i\|}$ is zero we get
\begin{equation}
\label{eq:16}
-\frac{\partial K_i}{\partial h_i} = -\frac{\partial}{\partial h_i} \bigg[ \frac{\delta h_i^T h_i}{\|\delta h_i\|} \bigg] = -\frac{\partial}{\partial h_i} \bigg[ \frac{\delta h_i}{\|\delta h_i\|} \bigg] h_i - \frac{\partial h_i}{\partial h_i} \frac{\delta h_i}{\|\delta h_i\|} = -0h_i - \frac{\delta h_i}{\|\delta h_i\|} = -\alpha_i \delta h_i
\end{equation}
Here $\alpha_i = \frac{1}{\|\delta h_i\|}$ is a positive scalar because $\delta h_i$ is non-zero. Let $\delta a_i$ be defined as $\delta a_i = \frac{\partial h_i}{\partial a_i} \delta h_i = \delta h_i \odot f'(a_i)$ where $a_i$ is the input to layer $i$. Using the product rule again, the gradients maximizing $L_k$ and $L_{k+1}$ are
\begin{equation}
\label{eq:17}
\frac{\partial L_i}{\partial c_i} = \frac{\partial}{\partial c_i} \bigg[ \frac{\delta h_i^T c_i}{\|\delta h_i\|} \bigg] = \frac{\partial}{\partial c_i} \bigg[ \frac{\delta h_i}{\|\delta h_i\|} \bigg] c_i + \frac{\partial c_i}{\partial c_i} \frac{\delta h_i}{\|\delta h_i\|} = 0c_i + \frac{\delta h_i}{\|\delta h_i\|} = \alpha_i \delta h_i
\end{equation}
\begin{equation}
\label{eq:18}
\frac{\partial L_{k+1}}{\partial W} = \frac{\partial L_{k+1}}{\partial c_{k+1}} \frac{\partial c_{k+1}}{\partial W} = \alpha_{k+1} (\delta h_{k+1} \odot f'(a_{k+1})) \delta h_k^T = \alpha_{k+1} \delta a_{k+1} \delta h_k^T
\end{equation}
\begin{equation}
\label{eq:19}
\frac{\partial L_k}{\partial W} = \frac{\partial c_k}{\partial W^T} \frac{\partial L_k}{\partial c_k^T} = (\delta h_{k+1} \odot f'(a_{k+1})) \alpha_k \delta h_k^T = \alpha_k \delta a_{k+1} \delta h_k^T
\end{equation}
Ignoring the magnitude of the gradients we have
$\frac{\partial L}{\partial W} = \frac{\partial L_k}{\partial W} = \frac{\partial L_{k+1}}{\partial W}$. If we project $h_i$ onto $\delta h_i$ we can write $h_i = \frac{h_i^T \delta h_i}{\|\delta h_i\|^2} \delta h_i + h_{i,res} = \alpha_i K_i \delta h_i + h_{i,res}$. For $W$, the prescribed update is
\begin{equation*}
\label{eq:20}
\delta W = -\delta h_{k+1} \frac{\partial h_{k+1}}{\partial W} = -(\delta h_{k+1} \odot f'(a_{k+1})) h_k^T = -\delta a_{k+1} h_k^T = -\delta a_{k+1} (\alpha_k K_k \delta h_k + h_{k,res})^T =
\end{equation*}
\begin{equation}
\label{eq:21}
-\alpha_k K_k \delta a_{k+1} \delta h_k^T - \delta a_{k+1} h_{k,res}^T = -K_k \frac{\partial L_k}{\partial W} - \delta a_{k+1} h_{k,res}^T
\end{equation}
We can indirectly maximize $L_k$ and $L_{k+1}$ by maximizing the component of $\frac{\partial L_k}{\partial W}$ in $\delta W$ by minimizing $K_k$. The gradient to minimize $K_k$ is the prescribed update $-\delta h_k$.

$L_k>0$ implies that the angle $\beta$ between $\delta h_k$ and the back-propagated gradient $c_k$ is within $90^{\circ}$ of each other because $cos(\beta) = \frac{c_k^T \delta h_k}{\|c_k\| \|\delta h_k\|} = \frac{L_k}{\|c_k\|} > 0 \Rightarrow |\beta| < 90^{\circ}$. $L_k>0$ also implies that $c_k$ is non-zero and thus descending. Then $\delta h_k$ will point in a descending direction because a vector within $90^{\circ}$ of the steepest descending direction will also point in a descending direction. 

\end{proof}

It is important to note that the theorem doesn't tell that the training will converge or reduce any error to zero, but if the fake gradient is successful in reducing $K$, then will this gradient also include a growing component that tries to increase the alignment criterion $L$.

The theorem can be applied to the output layer and the last hidden layer in a neural network. To achieve error-driven learning, we have to close the feedback loop. Then we get the update directions $\delta h_{k+1} = \frac{\partial J}{\partial a_y} = e$ and $\delta h_k = G_k(e)$ where $G_k(e)$ is a feedback path connecting the output to the hidden layer. The prescribed update will directly minimize the loss $J$ given $h_k$. If $L_k$ turns positive, the feedback will provide a update direction $\delta h_k=G_k(e)$ that reduces the same loss. The theorem can be applied successively to deeper layers. For each layer $i$, the weight matrix $W_i$ is updated to minimize $K_{i+1}$ in the layer above, and at the same time indirectly make it's own update direction $\delta h_i=G_i(e)$ useful.

Theorem \ref{theorem_fa} suggests that a large class of asymmetric feedback paths can provide a descending gradient direction for a hidden layer, as long as on average $L_i>0$. Choosing feedback paths $G_i(e)$, visiting every layer on it's way backward, with weights fixed and random, gives us the FA method. Choosing direct feedback paths $G_i(e)=B_ie$, with $B_i$ fixed and random, gives us the DFA method. Choosing a direct feedback path $G_1(e)=B_1e$ connecting to the first hidden layer, and then visiting every layer on it's way forward, gives us the IFA method. The experimental section shows that learning is possible even with indirect feedback like this.

Direct random feedback $\delta h_i=G_i(e)=B_ie$ has the advantage that $\delta h_i$ is non-zero for all non-zero $e$. This is because a random matrix $B_i$ will have full rank with a probability very close to $1$. A non-zero $\delta h_i$ is a requirement in order to achieve $L_i > 0$. Keeping the feedback static will ensure that this property is preserved during training. In addition, a static feedback can make it easier to maximize $L_i$ because the direction of $\delta h_i$ is more constant. If the cross-entropy loss is used, and the output target values are $0$ or $1$, then the sign of the error $e_j$ for a given sample $j$ will not change. This means that the quantity $B_i \ sign(e_j)$ will be constant during training because both $B_i$ and $sign(e_j)$ are constant. If the task is to classify, the quantity will in addition be constant for all samples within a class. Direct random feedback will also provide a update direction $\delta h_i$ with a magnitude that only varies with the magnitude of the error $e$.

If the forward weights are initialized to zero, then will $L_i = 0$ because the back-propagated error is zero. This seems like a good starting point when using asymmetric feedback because the first update steps have the possibility to quickly turn this quantity positive. A zero initial condition is however not a requirement for asymmetric feedback to work. One of the experiments will show that even when starting from a bad initial condition, direct random and static feedback is able to turn this quantity positive and reduce the training error to zero.

For FA and BP, the hidden layer growth is bounded by the layers above. If the layers above saturate, the hidden layer update $\delta h_i$ becomes zero. For DFA, the hidden layer update $\delta h_i$ will be non-zero as long as the error $e$ is non-zero. To limit the growth, a squashing non-linearity like hyperbolic tangent or logistic sigmoid seems appropriate. If we add a tanh non-linearity to the hidden layer, the hidden activation is bounded within $[-1,1]$. With zero initial weights, $h_i$ will be zero for all data points. The tanh non-linearity will not limit the initial growth in any direction. The experimental results indicate that this non-linearity is well suited together with DFA.

If the hyperbolic tangent non-linearity is used in the hidden layer, the forward weights can be initialized to zero. The rectified linear activation function (ReLU) will not work with zero initial weights because the error derivative for such a unit is zero when the bias and incoming weights are all zero.

\section{Experimental results}

\begin{figure}[h]
\begin{tabular}{ll}
	\centering
	\includegraphics[height=0.35\linewidth]{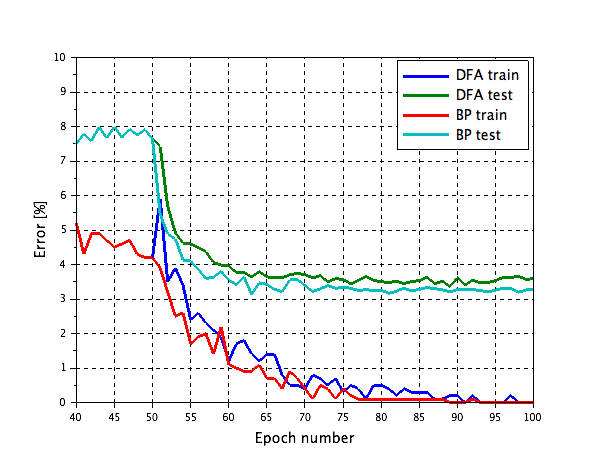}
	&
	\includegraphics[height=0.35\linewidth]{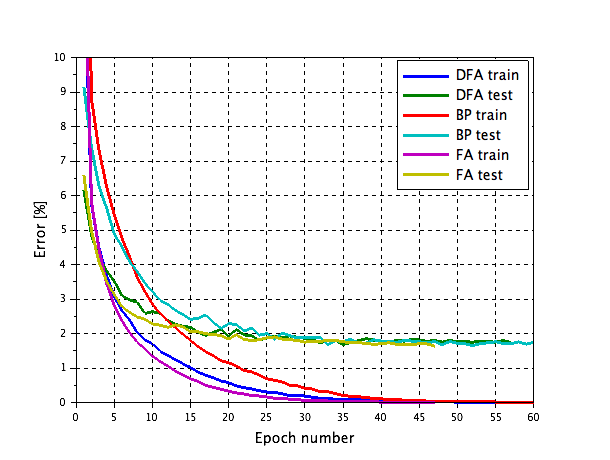}
\end{tabular}
\caption{Left: Error curves for a network pre-trained with a frozen first hidden layer. Right: Error curves for normal training of a 2x800 tanh network on MNIST.}
\label{fig:ErrorCurves}
\end{figure}

\begin{figure}[h]
\begin{tabular}{llll}
\includegraphics[height=0.22\linewidth]{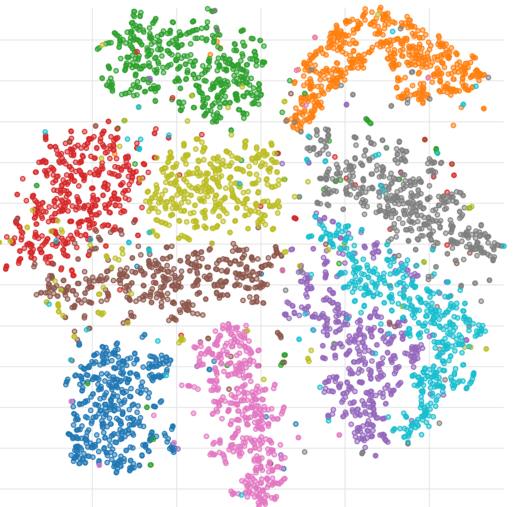}
&
\includegraphics[height=0.22\linewidth]{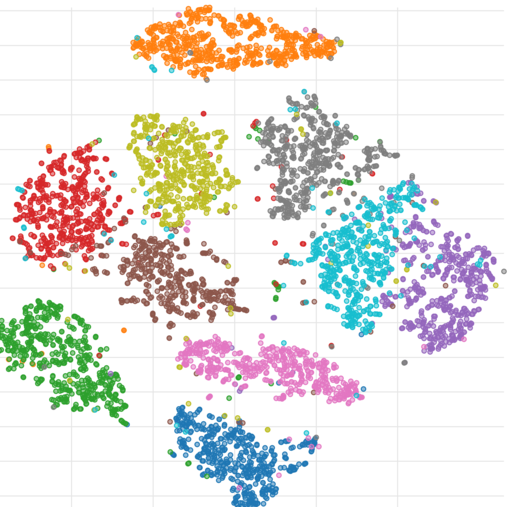}
&
\includegraphics[height=0.22\linewidth]{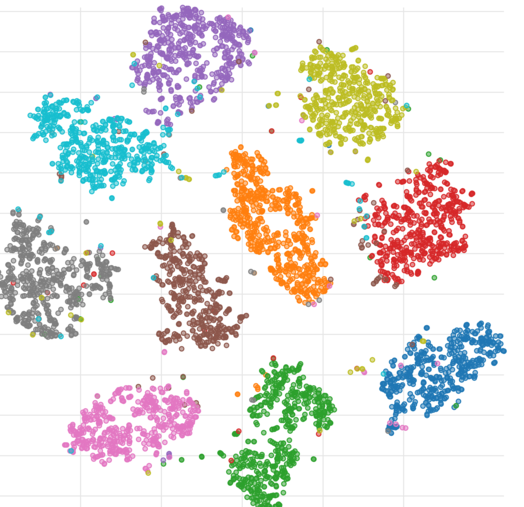}
&
\includegraphics[height=0.22\linewidth]{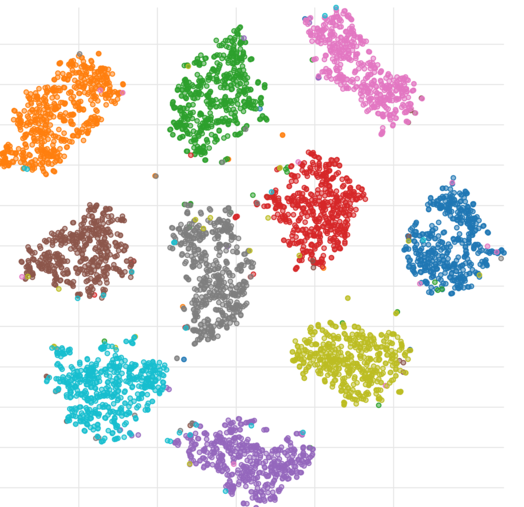}
\end{tabular}
\begin{tabular}{llll}
\includegraphics[height=0.22\linewidth]{error_feedback_fig3e.png}
&
\includegraphics[height=0.22\linewidth]{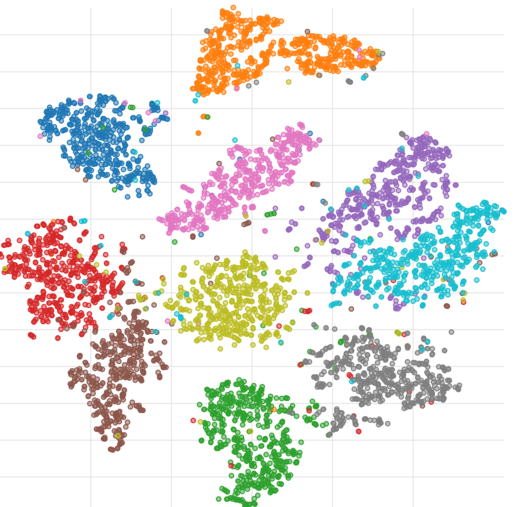}
&
\includegraphics[height=0.22\linewidth]{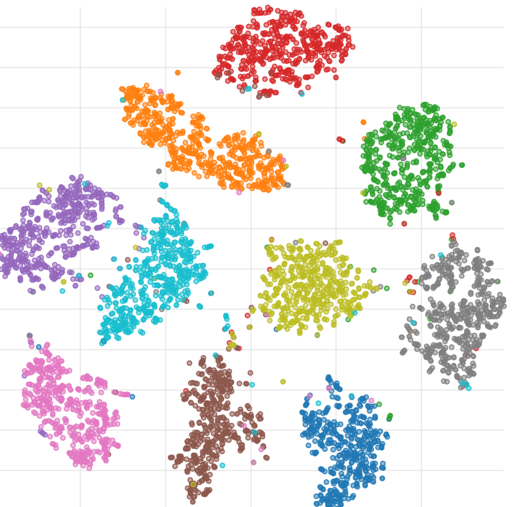}
&
\includegraphics[height=0.22\linewidth]{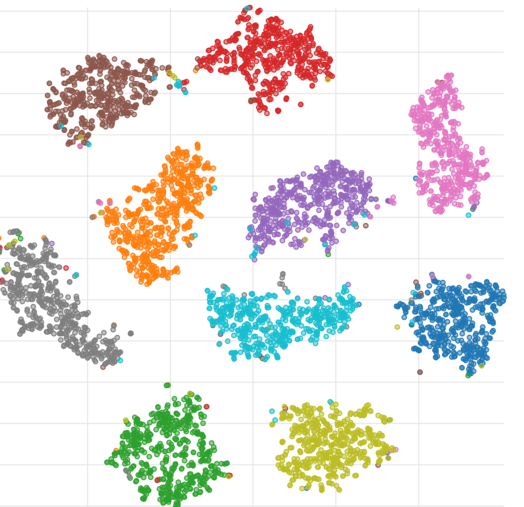}
\end{tabular}
\caption{t-SNE visualization of MNIST input and features. Different colors correspond to different classes. The top row shows features obtained with BP, the bottom row shows features obtained with DFA. From left to right: input images, first hidden layer features, second hidden layer features and third hidden layer features.}
\label{Fig:TSNE}
\end{figure}

To investigate if DFA learns useful features in the hidden layers, a 3x400 tanh network was trained on MNIST with both BP and DFA. The input test images and resulting features were visualized using t-SNE \citep{Maaten08}, see Figure \ref{Fig:TSNE}. Both methods learns features that makes it easier to discriminate between the classes. At the third hidden layer, the clusters are well separated, except for some stray points. The visible improvement in separation from input to first hidden layer indicates that error DFA is able to learn useful features also in deeper hidden layers.

Furthermore, another experiment was performed to see if error DFA is able to learn useful hidden representations in deeper layers. A 3x50 tanh network was trained on MNIST. The first hidden layer was fixed to random weights, but the 2 hidden layers above were trained with BP for 50 epochs. At this point, the training error was about 5\%. Then, the first hidden layer was unfreezed and training continued with BP. The training error decreased to 0\% in about 50 epochs. The last step was repeated, but this time the unfreezed layer was trained with DFA. As expected because of different update directions, the error first increased, then decreased to 0\% after about 50 epochs. The error curves are presented in Figure\ref{fig:ErrorCurves}(Left). Even though the update direction provided by DFA is different from the back-propagated gradient, the resulting hidden representation reduces the error in a similar way. 

Several feed-forward networks were trained on MNIST and CIFAR to compare the performance of DFA with FA and BP. The experiments were performed with the binary cross-entropy loss and optimized with RMSprop \citep{Tieleman12}. For the MNIST dropout experiments, learning rate with decay and training time was chosen based on a validation set. For all other experiments, the learning rate was roughly optimized for BP and then used for all methods. The learning rate was constant for each dataset. Training was stopped when training error reached 0.01\% or the number of epochs reached 300. A mini-batch size of 64 was used. No momentum or weight decay was used. The input data was scaled to be between 0 and 1, but for the convolutional networks, the data was whitened. For FA and DFA, the weights and biases were initialized to zero, except for the ReLU networks. For BP and/or ReLU, the initial weights and biases were sampled from a uniform distribution in the range $[-1 / \sqrt{fanin},1 / \sqrt{fanin}]$.
The random feedback weights were sampled from a uniform distribution in the range $[-1 / \sqrt{fanout},1 / \sqrt{fanout}]$.

\begin{table}[h]
\begin{center}
\begin{tabular}{l|l|l|l}
\multicolumn{1}{l}{\bf MODEL} &\multicolumn{1}{l}{\bf BP} &\multicolumn{1}{l}{\bf FA} &\multicolumn{1}{l}{\bf DFA}
\\ \hline \\
7x240 Tanh					&$2.16\pm0.13\%$  &$2.20\pm0.13\%\,(0.02\%)$  &$2.32\pm0.15\%\,(0.03\%)$\\
100x240 Tanh 				&                 &                 &$3.92\pm0.09\%\,(0.12\%)$\\
1x800 Tanh					&$1.59\pm0.04\%$  &$1.68\pm0.05\%$  &$1.68\pm0.05\%$\\
2x800 Tanh					&$1.60\pm0.06\%$  &$1.64\pm0.03\%$  &$1.74\pm0.08\%$\\
3x800 Tanh					&$1.75\pm0.05\%$  &$1.66\pm0.09\%$  &$1.70\pm0.04\%$\\
4x800 Tanh					&$1.92\pm0.11\%$  &$1.70\pm0.04\%$  &$1.83\pm0.07\%\,(0.02\%)$\\
2x800 Logistic				&$1.67\pm0.03\%$  &$1.82\pm0.10\%$  &$1.75\pm0.04\%$\\
2x800 ReLU					&$1.48\pm0.06\%$  &$1.74\pm0.10\%$  &$1.70\pm0.06\%$\\
2x800 Tanh + DO				&$1.26\pm0.03\%\,(0.18\%)$  &$1.53\pm0.03\%\,(0.18\%)$  &$1.45\pm0.07\%\,(0.24\%)$\\
2x800 Tanh + ADV				&$1.01\pm0.08\%$  &$1.14\pm0.03\%$  &$1.02\pm0.05\%\,(0.12\%)$\\
\end{tabular}
\end{center}
\caption{MNIST test error for back-propagation (BP), feedback-alignment (FA) and direct feedback-alignment (DFA). Training error in brackets when higher than 0.01\%. Empty fields indicate no convergence.}
\label{table:mnist}
\end{table}

The results on MNIST are summarized in Table \ref{table:mnist}. For adversarial regularization (ADV), the networks were trained on adversarial examples generated by the "fast-sign-method" \citep{GoodfellowSS14}. For dropout regularization (DO) \citep{SrivastavaHKSS14}, a dropout probability of $0.1$ was used in the input layer and $0.5$ elsewhere. For the 7x240 network, target propagation achieved an error of 1.94\% \citep{LeeZFB15}. The results for all three methods are very similar. Only DFA was able to train the deepest network with the simple initialization used. The best result for DFA matches the best result for BP.

\begin{table}[H]
\begin{center}
\begin{tabular}{l|l|l|l}
\multicolumn{1}{l}{\bf MODEL} &\multicolumn{1}{l}{\bf BP} &\multicolumn{1}{l}{\bf FA} &\multicolumn{1}{l}{\bf DFA}
\\ \hline \\
1x1000 Tanh            &$45.1\pm0.7\%\,(2.5\%)$  &$46.4\pm0.4\%\,(3.2\%)$	    &$46.4\pm0.4\%\,(3.2\%)$\\
3x1000 Tanh            &$45.1\pm0.3\%\,(0.2\%)$  &$47.0\pm2.2\%\,(0.3\%)$	    &$47.4\pm0.8\%\,(2.3\%)$\\
3x1000 Tanh + DO	      &$42.2\pm0.2\%\,(36.7\%)$  &$46.9\pm0.3\%\,(48.9\%)$	&$42.9\pm0.2\%\,(37.6\%)$\\
CONV Tanh              &$22.5\pm0.4\%$           &$27.1\pm0.8\%\,(0.9\%)$  	&$26.9\pm0.5\%\,(0.2\%)$\\
\end{tabular}
\end{center}
\caption{CIFAR-10 test error for back-propagation (BP), feedback-alignment (FA) and direct feedback-alignment (DFA). Training error in brackets when higher than 0.1\%.}
\label{table:cifar10}
\end{table}

The results on CIFAR-10 are summarized in Table \ref{table:cifar10}. For the convolutional network the error was injected after the max-pooling layers. The model was identical to the one used in the dropout paper \citep{SrivastavaHKSS14}, except for the non-linearity. For the 3x1000 network, target propagation achieved an error of 49.29\% \citep{LeeZFB15}. For the dropout experiment, the gap between BP and DFA is only 0.7\%. FA does not seem to improve with dropout. For the convolutional network, DFA and FA are worse than BP.

\begin{table}[H]
\begin{center}
\begin{tabular}{l|l|l|l}
\multicolumn{1}{l}{\bf MODEL} &\multicolumn{1}{l}{\bf BP} &\multicolumn{1}{l}{\bf FA} &\multicolumn{1}{l}{\bf DFA}
\\ \hline \\
1x1000 Tanh            &$71.7\pm0.2\%\,(38.7\%)$  &$73.8\pm0.3\%\,(37.5\%)$&$73.8\pm0.3\%\,(37.5\%)$\\
3x1000 Tanh            &$72.0\pm0.3\%\,(0.2\%)$ &$75.3\pm0.1\%\,(0.5\%)$   &$75.9\pm0.2\%\,(3.1\%)$\\
3x1000 Tanh + DO       &$69.8\pm0.1\%\,(66.8\%)$ &$75.3\pm0.2\%\,(77.2\%)$ &$73.1\pm0.1\%\,(69.8\%)$\\
CONV Tanh              &$51.7\pm0.2\%$           &$60.5\pm0.3\%$ 			 &$59.0\pm0.3\%$\\
\end{tabular}
\end{center}
\caption{CIFAR-100 test error for back-propagation (BP), feedback-alignment (FA) and direct feedback-alignment (DFA). Training error in brackets when higher than 0.1\%.}
\label{table:cifar100}
\end{table}

The results on CIFAR-100 are summarized in Table \ref{table:cifar100}. DFA improves with dropout, while FA does not. For the convolutional network, DFA and FA are worse than BP.

The above experiments were performed to verify the DFA method. The feedback loops are the shortest possible, but other loops can also provide learning. An experiment was performed on MNIST to see if a single feedback loop like in Figure \ref{fig:Architecture}d), was able to train a deep network with 4 hidden layers of 100 neurons each. The feedback was connected to the first hidden layer, and all hidden layers above were trained with the update direction forward-propagated through this loop. Starting from a random initialization, the training error reduced to 0\%, and the test error reduced to 3.9\%.

\section{Discussion}

The experimental results indicate that DFA is able to fit the training data equally good as BP and FA. The performance on the test set is similar to FA but lagging a little behind BP. For the convolutional network, BP is clearly the best performer. Adding regularization seems to help more for DFA than for FA.

Only DFA was successful in training a network with 100 hidden layers. If proper weight initialization is used, BP is able to train very deep networks as well \citep{Sussillo14}\citep{SaxeMG13}. The reason why BP fails to converge is probably the very simple initialization scheme used here. Proper initialization might help FA in a similar way, but this was not investigated any further.

The DFA training procedure has a lot in common with supervised layer-wise pre-training of a deep network, but with an important difference. If all layers are trained simultaneously, it is the error at the top of a deep network that drives the learning, not the error in a shallow pre-training network.

If the network above a target hidden layer is not adapted, FA and DFA will not give an improvement in the loss. This is in contrast to BP that is able to decrease the error even in this case because the feedback depends on the weights and layers above.

DFA demonstrates a novel application of the feedback alignment principle. The brain may or may not implement this kind of feedback, but it is a step towards better better understanding mechanisms that can provide error-driven learning in the brain. DFA shows that learning is possible in feedback loops where the forward and feedback paths are disconnected. This introduces a large flexibility in how the error signal might be transmitted. A neuron might receive it's error signals via a post-synaptic neuron (BP,CHL), via a reciprocally connected neuron (FA,TP), directly from a pre-synaptic neuron (DFA), or indirectly from an error source located several synapses away earlier in the informational pathway (IFA).

Disconnected feedback paths can lead to more biologically plausible machine learning. If the feedback signal is added to the hidden layers before the non-linearity, the derivative of the non-linearity does not have to be known. The learning rule becomes local because the weight update only depends on the pre-synaptic activity and the temporal derivative of the post-synaptic activity. Learning is not a separate phase, but performed at the end of an extended forward pass. The error signal is not a second signal in the neurons participating in the forward pass, but a separate signal relayed by other neurons. The local update rule can be linked to Spike-Timing-Dependent Plasticity (STDP) believed to govern synaptic weight updates in the brain, see \citep{BengioLBL15}.

Disconnected feedback paths have great similarities with controllers used in dynamical control loops. The purpose of the feedback is to provide a change in the state that reduces the output error. For a dynamical control loop, the change is added to the state and propagated forward to the output. For a neural network, the change is used to update the weights.

\section{Conclusion}

A biologically plausible training method based on the feedback alignment principle is presented for training neural networks with error feedback rather than error back-propagation. In this method, neither symmetric weights nor reciprocal connections are required. The error paths are short and enables training of very deep networks. The training signals are local or available at most one synapse away. No weight initialization is required.

The method was able to fit the training set on all experiments performed on MNIST, Cifar-10 and Cifar-100. The performance on the test sets lags a little behind back-propagation.

Most importantly, this work suggests that the restriction enforced by back-propagation and feedback-alignment, that the backward pass have to visit every neuron from the forward pass, can be discarded. Learning is possible even when the feedback path is disconnected from the forward path. 

\newpage

\bibliography{feedback-learning}
\bibliographystyle{plain}

\end{document}